\definecolor{greeen}{RGB}{180,216,189}
\definecolor{reed}{RGB}{255,217,212}
\newcolumntype{x}[1]{>{\centering\let\newline\\\arraybackslash\hspace{0pt}}p{#1}}
\newtheorem{theorem}{Theorem}
\newtheorem{lemma}[theorem]{Lemma}
\title{Inhibited Softmax for Uncertainty Estimation in Neural Networks}
\author{ {\bf Marcin Możejko\thanks{\quad Authors contributed equally.}} \\
Sigmoidal \\
Warsaw, Poland
\And
{\bf Mateusz Susik\footnotemark[1]}  \\
Sigmoidal          \\
Warsaw, Poland \\
\And
{\bf Rafał Karczewski}   \\
Sigmoidal \\
Warsaw, Poland
}
\begin{document}

\maketitle

\begin{abstract}
We present a new method for uncertainty estimation and out-of-distribution detection in neural networks with softmax output. We extend the softmax layer with an additional constant input. The corresponding additional output is able to represent the uncertainty of the network. The proposed method requires neither additional parameters nor multiple forward passes nor input preprocessing nor out-of-distribution datasets. We show that our method performs comparably to more computationally expensive methods and outperforms baselines on our experiments from image recognition and sentiment analysis domains.
\end{abstract}

\section{Introduction}

The applications of computational learning systems might cause intrusive effects if we assume that predictions are always as accurate as during the experimental phase. Examples include misclassified traffic signs \citep{Evtimov18} and an image tagger that classified two African Americans as gorillas \citep{Curtis15}. This is often caused by overconfidence of models that has been observed in the case of deep neural networks \citep{Guo17}. Such malfunctions can be prevented if we estimate correctly the uncertainty of the machine learning system. Beside AI safety, uncertainty is useful in the active learning setting in which the data collection process is expensive or time consuming \citep{Houlsby11, Rottmann18}.

While uncertainty estimation in neural networks is an active field of research, the current methods are rarely adopted. It is desirable to develop a method that does not create additional computational overhead. Such a method could be used in environments that focus on quick training and/or inference. If such a method is simple, the ease of implementation should encourage practitioners to develop danger-aware systems in their work.

We suggest a method that measures the uncertainty of the neural networks with a softmax output layer. We replace this layer with Inhibited Softmax layer \citep{Saito16}, and we show that it can be used to express the uncertainty of the model. In our experiments, the method outperforms baselines and performs comparably with more computationally expensive methods on the out-of-distribution detection task. 

We contribute with:

\begin{itemize}
\item The mathematical explanation on the reason why the additional Inhibited Softmax output can be interpreted as an uncertainty measure.
\item The additions to the Inhibited Softmax that improve its uncertainty approximation properties.
\item The benchmarks comparing Inhibited Softmax, baseline and contemporary methods for measuring uncertainty in neural networks.
\end{itemize}

\section{Related Work}

The certainty of classification models can be represented by the maximum of probabilities \citep{HendrycksGimpel16}. It has been shown, however, that deep neural networks are prone to the overconfidence problem \citep{Guo17}, and thus so simple a method might not measure certainty well.

The modern Bayesian Neural Networks \citep{Blundell15, Hernandez15, Louizos17, MalininGales18, Wang16, Hafner18, Zhang18, Khan18} aim to confront this issue by inferring distribution over the models' weights. This approach has been inspired by Bayesian approaches suggested as early as the nineties \citep{Buntine91, Neal92}. A very popular regularisation mean - dropout - also can be a source of approximate Bayesian inference \citep{GalGhahramani16}. Such technique, called Monte Carlo dropout \citep{GalGhahramani15}, belongs to the Bayesian Neural Networks environment and has been since used in the real-life scenarios \citep[e.g.][]{Leibig17}. In the Bayesian Neural Networks, the uncertainty is modelled by computing the predictive entropy or mutual information over the probabilities coming from stochastic predictions \citep{Smith18}.

Other methods to measure the uncertainty of neural networks include a non-Bayesian ensemble \citep{Lakshminarayanan17}, a student network that approximates the Monte Carlo
posterior predictive distribution \citep{Balan15}, modelling Markov chain Monte Carlo samples with a GAN \citep{Wang18}, Monte Carlo Batch Normalization \citep{Teye18} and the nearest neighbour analysis of penultimate layer embedding \citep{Mandelbaum17}.

The concept of uncertainty is not always considered as a homogeneous whole. Some of the authors distinguish two types of uncertainties that influence predictions of machine learning models \citep{Kendall17}: epistemic uncertainty and aleatoric uncertainty. Epistemic uncertainty represents the lack of knowledge about the source probability distribution of the data. This uncertainty can be reduced by increasing the size of the training data. Aleatoric uncertainty arises from homoscedastic, heteroscedastic and label noises and cannot be reduced by the model. We will follow another source \citep{MalininGales18} that defines the third type: distributional uncertainty. It appears when the test distribution differs from the training distribution, i.e. when new observations have different nature then the ones the model was trained on.

A popular benchmark for assessing the ability of the models to capture the distributional uncertainty is distinguishing the original test set from out-of-distribution dataset \citep{HendrycksGimpel16}. There are works that focus only on this type of uncertainty \citep{Lee18}. ODIN \citep{Liang18} does not require changing already existing network and relies on gradient-based input preprocessing. Another work \citep{DeVries18} is close to the functionality of our method, as it only adds a single densely connected layer and uses a single forward pass for a sample.

Bayesian neural networks are more computationally demanding as they usually require multiple stochastic passes and/or additional parameters to capture the priors specification.

Our method meets all the following criteria:
\begin{itemize}
\item No additional learnable parameters required.
\item Only single forward pass needed.
\item No additional out-of-distribution or adversarial observations required.
\item No input preprocessing.
\end{itemize}

Another work that meets these criteria is based on a Dirichlet interpretation of the softmax output \citep{Sensoy18} and subjective logic. The softmax function is replaced with a ReLU layer in order to model belief masses within the posterior distribution.

The technique we use, Inhibited Softmax, has been successfully used for the prediction of the background class in the task of extraction of the objects out of aerial imagery \citep{Saito16}. The original work does not mention other possible applications of this softmax modification.

\section{Inhibited Softmax}

In this section, we will define the Inhibited Softmax function. We will provide a mathematical rationale on why it can provide uncertainty estimation when used as the output function of a machine learning model. Later we will present adjustments which we have made to the model architecture when applying Inhibited Softmax to a multilayer neural network. 

Let $x\in \mathbb{R}^n$ and $a\in \mathbb{R}$, then $IS_a$ is a function which maps $\mathbb{R}^{n}$ to $\mathbb{R}^{n}$. The i-th output in Inhibited Softmax is equal to:

\begin{equation}
    IS_{a}(x)_{i} = \frac{\exp x_i}{\sum_{i=1}^{n}\exp x_i + \exp a} \in (0, 1).
\end{equation}

Following equation holds:

\begin{equation}
    IS_{a}(x)_{i} = S(x)_i P_a^c(x),
\end{equation}

where:

\begin{equation}
    P_a^c(x) = \frac{\sum_{i=1}^{n}\exp x_i}{\sum_{i=1}^{n}\exp x_i + \exp a} \in (0, 1).
\end{equation}

and $S(x)$ is the standard softmax function applied to vector $x$. We will later refer to $P_a^c(x)$ as the "certainty factor".

Now let's assume that $IS_a$ is the output of a multiclass classification model trained with the \textit{cross-entropy} loss function $l_{IS}$. Assuming that the true class of a given example is equal to $t$ the loss is equal to:

\begin{equation}
\begin{split}
    l_{IS}(x, t) &= -\log IS_a (x)_t =  -\log S(x)_t - \log P_a^c (x) = \\ 
    & l_S(x, t) - \log P_a^c (x),
\end{split}
\end{equation}

where $l_S$ is the \textit{cross-entropy} loss function for a model with a standard softmax output. The optimisation process both minimises classification error (given by $l_S$) and maximises the certainty factor $P_a^c(x)$ for all the training examples. This is the intuition that explains why Inhibited Softmax mights serve as an uncertainty estimator - $P_a^c(x)$ is maximised for the cases from the training distribution.

If $P_a^c$ estimates the certainty of the model, in order to provide a valid uncertainty score we introduce:

\begin{equation}
    P_a^u (x) = 1 - P_a^c (x) = \frac{\exp a}{\sum_{i=1}^{n}\exp x_i + \exp a},
\end{equation}

It is minimised during the optimisation process. It might be interpreted as an artificial softmax output from the additional channel.

\subsection{Adjustments and Regularisation}

Although $P_a^c$ is maximized during the optimisation process we would like to ensure that its high values are obtained:
\begin{itemize}
    \item only for the cases from the training distribution,
    \item solely because of the training process, and neither because of the trivial solutions nor accidental network structure.
\end{itemize}
Because of that, we applied the following network adjustments:

\begin{itemize}
    \item \textbf{Removing bias terms from the inhibited softmax layer\footnote{See \hyperref[appendix]{Appendix 4} for the derivations.}} If we assume that the $i-th$ input to inhibited softmax is the output of a linear layer, namely:
    
    \begin{equation}
    x_i = \theta_i x_p + b_i,
    \end{equation}
    
    where $x_p$ is the vector of activations from the penultimate layer and $\theta_i$ is a vector of weights. As the derivative of the $\log$ \textit{certainty} factor w.r.t. bias:
    
    \begin{equation}
    \dfrac{\delta \log P_a^c(x)}{\delta b_i} = S(x)_i - IS_a(x)_i > 0.   
    \end{equation}

    is always positive, then increasing the value of $P_a^c (x)$ using any kind of gradient method can be achieved by increasing the values of biases. Now because of the fact that: 
    \begin{equation}
    \dfrac{\delta l_{S}(x, t)}{\delta b_i} = S(x)_i - \mathbb{I}_{i = t},
    \end{equation}
    (where $\mathbb{I}$ is indicator function), the derivative of \textit{cross-entropy} loss w.r.t. to biases vector $b$ alongside all-ones vector $(1, \dots, 1)$ is equal to:
    \begin{equation}
    \begin{split}
        \dfrac{\delta l_{S}(x, t)}{\delta \left(b = (1, \dots, 1)\right)} = \sum_{i = 1}^{n}\dfrac{\delta l_{S}(x, t)}{\delta b_i} =\\
    \sum_{i=1}^{n} \left(S(x)_i - \mathbb{I}_{i = t}\right) \equiv 0,
    \end{split}
    \end{equation}
    what implies that $l_S$ is constant along direction $(1, \dots, 1)$ as a function of biases $b$. Because of that - it is possible to increase the value of $P_a^c$ without the change of the classification loss by increasing all parameters $b_i$ by the same positive value $\delta > 0$. Due to that the network training process might result in a trival solution where $P_a^c$ was maximized by maximization of parameters $b_i$.
    
    \item \textbf{Changing the activation function to a kernel function in the penultimate layer of the network} The kernel activations are significantly grater than $0$ only close to their modes. Therefore, they make its outputs noticeably greater from $0$ only for a narrow, learnable region $W$ in its input space when applied to the penultimate layer. As we removed biases in a final layer - the input to $IS_a$ is also noticeably different from $0$ only in this narrow region as a strictly linear transformation of the activations vector of the penultimate layer. We believe that this prevents $P_a^c$ from achieving huge values due to an accidental inner network structure - as we expect that the out-of-distribution cases will often fall outside of the region $W$. Beyond $W$, $P_a^c$ has a constant value of:
    
    \begin{equation}
    \begin{split}
    P_a^c\left((0, \dots, 0)\right) &= \frac{\sum_{i=1}^{n}\exp 0}{\sum_{i=1}^{n}\exp 0 + \exp a} = \\
    &\frac{n}{n + \exp a}.
    \end{split}
    \end{equation}
    
    which might be interpreted as a "base rate certainty" for unseen examples controlled by hyperparameter $a\in\mathbb{R}$.
    
    \item \textbf{Evidence regularisation} In order to encourage examples from the training distribution to fall into the region $W$, where $P_a^c$ might be maximised, we introduced the following regularisation term to the loss function:
    \begin{equation}
        l'_{IS}(x, t) = l_{IS}(x, t) - \lambda\|x_p\|_{1},
    \end{equation}
    
    where $\|x_p\|_1$ is an $l_1$ norm of the activations of the penultimate layer and $\lambda > 0$ is a regularisation hyperparameter. During the optimisation process $\|x_p\|_1$ is maximised. It makes the activations of the penultimate layer $w$ noticeably greater than $0$ and consequently the examples from the training distribution fall into the region $W$. 
\end{itemize}

These adjustments significantly increased the certainty estimation properties of Inhibited Softmax. The dependency between performance and applying these changes to the model architecture is presented in
\hyperref[appendix1]{Appendix 1}.

\section{Experiments}

Firstly, we visualize the method on an XOR toy example taken from \citep{DeVries18}. We used the same experiment setting as the original paper, but with the Inhibited Softmax instead of the method proposed there. Our network consists of 4 layers with 100 hidden units each followed by the output layer. The dataset consists of 500 training samples. We can observe that our method reasonably estimates aleatoric uncertainty \hyperref[fig:toyexample]{(Figure 1)}. The uncertainty is present in the regions between the classes and where the classes overlap. In case of a small overlap (no noise in the dataset), the network is highly confident. The larger the noise the wider the region where our method is uncertain.

\subsection{Benchmarks}

\newpage

We have compared various ways of estimating uncertainty in neural networks (hereinafter referred to as "methods"). For the benchmarks, we implement these methods on top of the same \textit{base} neural network. We use the following experiments to check their quality:
\begin{itemize}
\item Out-of-distribution (OOD) examples detection - following \citep{HendrycksGimpel16} we use ROC AUC and average precision (AP) metrics to check the classifier's ability to distinguish between the original test set and a dataset coming from another probability distribution. These experiments show whether the method measures well the distributional uncertainty on a small sample of out-of-distribution datasets.
\item Predictive performance experiment - given a dataset, we split it into train, test and validation sets. We report accuracy and negative log loss on the test set. Any method should not deteriorate the predictive performance of the network.
\item Wrong prediction detection - we expect that the more confident the model is, the more accurate its predictions on in-distribution dataset should be. In this experiment the ground truth labels are used to construct two classes after the prediction on the test dataset is performed. The classes represent the correctness of the classifier prediction. Then, the uncertainty measure is used to compute TPRs and FPRs. We report ROC AUC scores on this setting. This experiment shows whether the method measures well the combination of epistemic and aleatoric uncertainty on a small sample of datasets. In this experiment, we do not report average precision score, as it would be distorted by different levels of misclassification in the predictions.
\end{itemize}

\begin{strip}
\centering
\includegraphics[page=1,width=\textwidth]{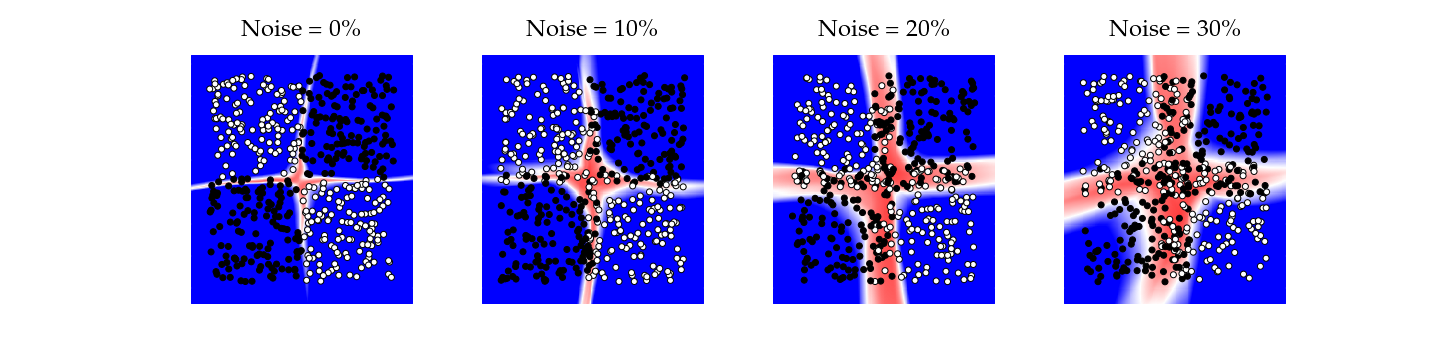}
\captionof{figure}{\label{fig:toyexample}Confidence predictions on XOR dataset. The evidence regularization constant was $10e^{-6}$, $a$ was 1. Blue indicates the largest confidence, red the largest uncertainty.}
\end{strip}

\clearpage

\begin{strip}
\centering
\begin{tabular}{ p{5cm} | p{5.7cm} | l }
\textbf{Method} & \textbf{Uncertainty measure} & \textbf{Abbreviation} \\\hline
Inhibited Softmax & value of the artificial softmax output & IS \\\hline
Base network & $1 - max(p_{i})$ & BASE \\\hline
Base network & entropy of the probabilities & BASEE \\\hline
Monte Carlo Dropout \citep{GalGhahramani16} & predictive entropy of the probabilities from 50 stochastic forward passes & MCD \\\hline
Bayes By Backprop with a Gaussian prior \citep{Blundell15} & predictive entropy of the probabilities from 10 stochastic forward passes & BBP \\\hline
Deep Ensembles without adversarial training \citep{Lakshminarayanan17} & predictive entropy of the probabilities from 5 base neural networks & DE
\end{tabular}
\captionof{table}{\label{tab:methods} The methods used for benchmarks. Both the base network methods will serve as baselines.}
\end{strip}

\begin{strip}
\centering
\begin{tabular}{ p{2.4cm} | p{4.8cm} | p{5.1cm} }
\textbf{In-distribution dataset} & \textbf{Out-of-distribution datasets} & \textbf{Base network} \\\hline

CIFAR-10 \citep{Krizhevsky09} & SVHN \citep{Netzer11} \newline LFW-A \citep{LearnedMiller15} & Custom small network trained with Adadelta \citep{Zeiler12} \\\hline
MNIST & NOTMNIST \citep{Bulatov11} \newline black and white CIFAR-10 \newline Omniglot \citep{Changala15} & Lenet-5 \citep{LeCun98} with an average pooling instead of a subsampling and a softmax layer instead of a gaussian connection trained with Adadelta \citep{Zeiler12} \\\hline

IMDB \citep{Maas11} & Customer Reviews \citep{Hu04} \newline Movie Reviews \citep{Pang04} \newline Reuters-21578 & Linear classifier on top of an embedding \citep[as in][]{HendrycksGimpel16} trained with RMSProp \citep{Tieleman12}
\end{tabular}
\captionof{table}{\label{tab:exp} Datasets and neural architectures used.}
\end{strip}

\hyperref[tab:methods]{Table 1} shows the methods and respective uncertainty measures that will be benchmarked\footnote{The choice of hyperparameters and training details for methods other than Inhibited Softmax is further discussed in the appendix. The implementation is available at: \url{https://github.com/MSusik/Inhibited-softmax}}. We establish two baselines. Both of them work on the unmodified base neural network, but uncertainty is measured in different ways, using either the maximum of probabilities over classes or entropy of probabilities. The method we suggest to use is referred to as \textit{IS}.

We have chosen these methods as they have been already used for benchmarking \citep[e.g.][]{Louizos17}, and they are well-known in the Bayesian Neural Network community. In the case of Inhibited Softmax we set evidence regularisation to $10^{-6}$, $a$ to 1 and we use rescaled Cauchy distribution's PDF ($f(x) = \frac{1}{1 + x^2}$). The datasets\footnote{Preprocessing is discussed in the appendix.} and the respective base neural networks we have chosen for the experiments are reported in \hyperref[tab:exp]{Table 2}.

The base network for CIFAR-10 consists of 3 2D convolutional layers with a 2D batch norm and 0.25 dropout. The convolving filter size was 3. Each convolutional layer was followed by 2D maximum pooling over 3x3 neurons with stride 2. The number of filters in the consecutive layers are 80, 160 and 240. Then there are 3 fully-connected layers. After the first fully-connected layer we apply 0.25 dropout. The number of neurons in the consecutive dense layers are 200, 100, 10.

% Opis color codingu sie powtarza w opisach tabel

In the experiments, we report averages over three training and prediction procedures on the same training-test splits. 

\newpage

\begin{strip}
\centering
\begin{adjustbox}{center}
\begin{tabular}{l | l | c | c | c | c | c | c}

\textbf{Datasets (In/Out)} & \textbf{Score} & \textbf{MCD} & \textbf{IS}  & \textbf{BASE} & \textbf{BASEE} & \textbf{BBP} & \textbf{DE} \\\hhline{*8-}
\quad MNIST/ & ROC  & 0.974 & 0.973 & 0.958 & 0.956 & \cellcolor{greeen}0.982 & 0.979 \\\hhline{~|*7-}
NOTMNIST & AP & 0.984 & 0.983 & 0.938 & 0.955 & \cellcolor{greeen}0.989 & 0.988 \\\hhline{*8-}
\quad MNIST/ & ROC & \cellcolor{greeen}0.999 & 0.998 & 0.997 & 0.996 & \cellcolor{greeen}0.999 & 0.999 \\\hhline{~|*7-}
CIFAR-10 B\&W & AP & \cellcolor{greeen}0.9997 & 0.9995 & 0.9994 & 0.999 & \cellcolor{greeen}0.9997 & \cellcolor{greeen}0.9997 \\\hhline{*8-}
\quad MNIST/ & ROC & 0.977 & \cellcolor{greeen}0.978 & 0.956 & 0.953 & 0.975 & 0.977 \\\hhline{~|*7-}
Omniglot & AP & \cellcolor{greeen}0.992 & \cellcolor{greeen}0.992 & 0.983 & 0.981 & 0.991 & 0.99 \\\hhline{*8-}
\quad CIFAR-10/ & ROC & 0.927 & 0.931 & 0.866 & 0.865 & 0.913 & \cellcolor{greeen}0.946 \\\hhline{~|*7-}
SVHN & AP & 0.987 & 0.986 & 0.961 & 0.958 & 0.981 & \cellcolor{greeen}0.99 \\\hhline{*8-}
\quad CIFAR-10/ & ROC & 0.693 & 0.74 & 0.593 & 0.594 & 0.723 & \cellcolor{greeen}0.755 \\\hhline{~|*7-}
LFW-A & AP & 0.142 & 0.174 & 0.127 & 0.126 & 0.169 & \cellcolor{greeen}0.181 \\\hhline{*8-}
\quad IMDB/ & ROC & 0.723 & \cellcolor{greeen}0.736
& 0.717 & 0.717 & 0.729 & 0.718 \\\hhline{~|*7-}
Customer Reviews & AP & 0.027 & \cellcolor{greeen}0.088 & 0.027 & 0.027 & 0.028 & 0.026 \\\hhline{*8-}
\quad IMDB/ & ROC & \cellcolor{reed}0.836 & \cellcolor{greeen}0.877 & 0.837 & 0.837 & 0.845 & \cellcolor{reed}0.835 \\\hhline{~|*7-}
Movie Reviews & AP & \cellcolor{reed}0.755 & \cellcolor{greeen}0.876 & 0.756 & 0.756 & 0.769 & \cellcolor{reed}0.753 \\\hhline{*8-}
\quad IMDB/ & ROC & 0.817 & \cellcolor{greeen}0.829 & 0.816 & 0.816 & \cellcolor{reed}0.805 & \cellcolor{reed}0.815 \\\hhline{~|*7-}
Reuters-21578 & AP & 0.735 & \cellcolor{greeen}0.82 & 0.727 & 0.727 & \cellcolor{reed}0.715 & \cellcolor{reed}0.724

\end{tabular}
\end{adjustbox}
\captionof{table}{\label{tab:out} Out of distribution detection results. The green colour shows the best results, the red - results worse than any of the baselines.}

\end{strip}

% TO DO: Resnet + transfer learning
% TO DO: recheck DE's rsults for sentiment

In all of the selected computer vision OOD tasks, Inhibited Softmax improves upon baselines. IS is better than BASE on \textit{NOTMNIST} (0.973 ROC AUC vs 0.958) and Omniglot (0.978 ROC AUC vs 0.956). IS' ROC AUC performance on \textit{MNIST/NOTMNIST} and \textit{CIFAR-10/SVHN} is similar to MCD (resp. 0.973 vs 0.974 and 0.931 vs 0.927). IS achieves a very good result on the \textit{CIFAR-10/LFW-A} task. In the task of discriminating \textit{MNIST} from black and white \textit{CIFAR-10} \hyperref[tab:out]{(Table 3)} our method achieves very high detection performance (0.998 ROC AUC and 0.9995 AP). This dataset is the least similar to \textit{MNIST}. In contrast to other datasets tested against the digit recognition networks, various shades of gray dominate the images. All the Bayesian methods vastly outperform the baselines on the computer vision tasks.

Inhibited Softmax improves upon other methods on the sentiment analysis task. Especially large improvement can be observed on the test against the \textit{Movie Reviews} dataset. For example, the ROC AUC of IS (0.877) is much greater than the ROC AUC of MCD (0.836). Methods other than IS are not much better than the baseline (BBP's 0.845 ROC AUC), sometimes being insignificantly worse (DE's 0.835 ROC AUC). IS is also the best on the test against \textit{Reuters-21578} and \textit{Customer reviews} (resp. 0.829 and 0.736). Two baselines achieve the same results on sentiment analysis experiment as there is no difference in ranking of the examples between the chosen uncertainty measures. We do not corroborate the results from the baseline publication \citep{HendrycksGimpel16}. We discovered that in that paper the out-of-distribution samples for \textit{Movie Reviews} were constructed by taking single lines from the dataset file, while the reviews span over a few lines. Our results show that the detection is a tougher task when full reviews are used (BASE achieves 0.837 ROC AUC vs 0.94 ROC AUC \citep{HendrycksGimpel16}).

In our experiments, the Inhibited Softmax does not deteriorate significantly the predictive performance of the neural network \hyperref[tab:predictive]{(Table 4)}. Its accuracy was similar to the baselines on every task, for example on \textit{IMDB} is 0.2\% lower and on \textit{CIFAR-10} dataset the accuracy is 0.5\% lower. Ensembling the networks gives the best predictive performance. We observed that all the text models perform classification very well on the \textit{Movie Reviews} dataset. Despite coming from a different probability distribution this dataset contains strong sentiment retrieved by the networks for the prediction of the correct label. It shows the generalization ability of the networks.

Wrong prediction detection results \hyperref[tab:wrong-prediction]{(Table 5)} show that IS is the only method that is able to detect misclassified observations better than a random classifier (0.689 ROC AUC) on the sentiment task. All the methods improve slightly over the baselines on the \textit{MNIST} dataset with DE improving the most (0.987) and \textit{BBP} improving the least (0.979). The Inhibited Softmax and Monte Carlo Dropout are worse than the baseline on \textit{CIFAR-10} (resp. 0.869 and 0.85 vs 0.875).  

\newpage

\begin{strip}
\centering
%\resizebox{\textwidth}{!}{%
\begin{tabular}{l | l | c | x{2cm} | c | c | c}

\multicolumn{2}{l|}{} & \textbf{MCD} & \textbf{IS} & \textbf{BASE(E)} & \textbf{BBP} & \textbf{DE} \\\hhline{*7-}
\multirow{ 2}{*}{MNIST} & Accuracy & 0.992 & \cellcolor{reed}0.991 & 0.992 & \cellcolor{reed}0.991 & \cellcolor{greeen}0.994 \\\hhline{~|*6-}
 & NLL & \cellcolor{reed}0.034 & 0.03 & 0.035 & 0.031 & \cellcolor{greeen}0.019 \\\hhline{*7-}
\multirow{ 2}{*}{CIFAR10} & Accuracy & 0.854 & \cellcolor{reed}0.846 & 0.851 & \cellcolor{reed}0.841 & \cellcolor{greeen}0.88 \\\hhline{~|*6-}
 & NLL & 0.527 & 0.62 & 1.661 & 0.514 & \cellcolor{greeen}0.385 \\\hhline{*7-}
\multirow{ 2}{*}{IMDB} & Accuracy & 0.883 & \cellcolor{reed}0.881 & 0.883 & \cellcolor{reed}0.882 & \cellcolor{greeen}0.885 \\\hhline{~|*6-}
 & NLL & 0.291 & \cellcolor{reed}0.304 & 0.295 & \cellcolor{reed}0.302 & \cellcolor{greeen}0.289 \\\hhline{*7-}
IMDB model & Accuracy & \cellcolor{reed}0.848 & \cellcolor{reed}0.852 & \cellcolor{greeen}0.857 & \cellcolor{reed}0.849 & \cellcolor{reed}0.851 \\\hhline{~|*6-}
on Movie Reviews & NLL & \cellcolor{reed}0.378 & \cellcolor{greeen}0.346 & 0.362 & \cellcolor{reed}0.365 & \cellcolor{reed}0.586 \\\hhline{*7-} 
\multicolumn{2}{l|}{Number of forward passes} & 50 & 1 & 1 & 10 & 1 \\\hhline{*7-}
\multicolumn{2}{l|}{Params (vs BASE)} & x & \texttildelow x (no bias in the last layer) & x & \texttildelow 2x & 5x

\end{tabular}%
%}
\captionof{table}{\label{tab:predictive} Predictive performance experiment results and computational overhead. Only the baseline and Inhibited Softmax have neither additional parameters nor require multiple forward passes. The green color shows the best results, the red - results worse than the baseline. We compare Inhibited Softmax and baselines with methods that require more forward passes and/or more params.}
\end{strip}

\begin{table}
%\centering
\resizebox{\columnwidth}{!}{%
\begin{tabular}{l | l | c | c | c | c | c }
\textbf{Dataset} & \textbf{MCD} & \textbf{IS} & \textbf{BASE} & \textbf{BASEE} & \textbf{BBP} & \textbf{DE} \\\hhline{*7-}
MNIST & 0.982 & 0.983 & 0.979 & 0.979 & 0.979 & \cellcolor{greeen}0.987 \\\hhline{*7-}
CIFAR-10 & \cellcolor{reed}0.869 & \cellcolor{reed}0.85 & 0.875 & 0.877 & 0.878 & \cellcolor{greeen}0.886 \\\hhline{*7-}
IMDB & \cellcolor{reed}0.418 & \cellcolor{greeen}0.689 & 0.501 & 0.501 & \cellcolor{reed}0.398 & \cellcolor{reed}0.391
\end{tabular}%
}
\caption{\label{tab:wrong-prediction} Wrong prediction detection results (ROC AUC). The green color shows the best results, the red - results worse than any of the baselines.}
\end{table}

\section{Visualisation}

In practice, the overlap of the correctly detected out-of-distribution observations between Inhibited Softmax and Bayesian methods is surprisingly large. To demonstrate it, we compare Monte Carlo dropout and our method on an experiment from \citep{Smith18}. We train a fully connected variational autoencoder (VAE) on the \textit{MNIST} dataset. Then, we create a grid in the latent space and for each point we generate a sample. We plot the uncertainty estimation of the methods on generated samples from these points together with the labelled latent encoding of the test samples \hyperref[fig:vis]{(Figure 3)}. Both methods are unable to detect out of distribution samples generated from the bottom left corner of the 2D latent space. Another example of the similarity is that both of the methods do not estimate high uncertainty in the area where blue and purple classes intersect in the latent space.

This leads to a hypothesis that there exist samples that are tougher to detect by uncertainty measures for any recently proposed method. Similarly to the ideas from adversarial attacks field, it might be worth to investigate how to construct such samples. We believe it might be a way to improve uncertainty sampling performance.

\quad \\

\section{Further Work \& Limitations}

We notice that working on following aspects can enhance the uncertainty estimation:
\begin{itemize}
\item Developing an analogous to IS method for regression.
\item Limiting the number of required hyperparameters for Inhibited Softmax.
\item Expanding the method to hidden layers. This is especially promising as the Inhibited Softmax performs better than other methods on a shallow network in our sentiment analysis experiment. On deeper networks IS has not yet such advantage and it might be possible to outperform other methods.
\item Applying Inhibited Softmax to larger architectures and larger variety of machine learning tasks (e.g. sophisticated NLP models, reinforcement learning).
\end{itemize}

Although we showed experimentally that the architecture adjustments applied to the network architecture are beneficial, we are still lacking the full and sound mathematical explanation of their influence on model behaviour. Especially important is the explanation of the Inhibited Softmax's capability of modelling the alleatoric uncertainty presented in \hyperref[fig:toyexample]{Figure 1} and \hyperref[fig:wrong-prediction]{Table 5}.

\begin{strip}
\centering
\captionsetup{type=figure}
\begin{subfigure}{.5\textwidth}
\includegraphics[page=1,width=\textwidth]{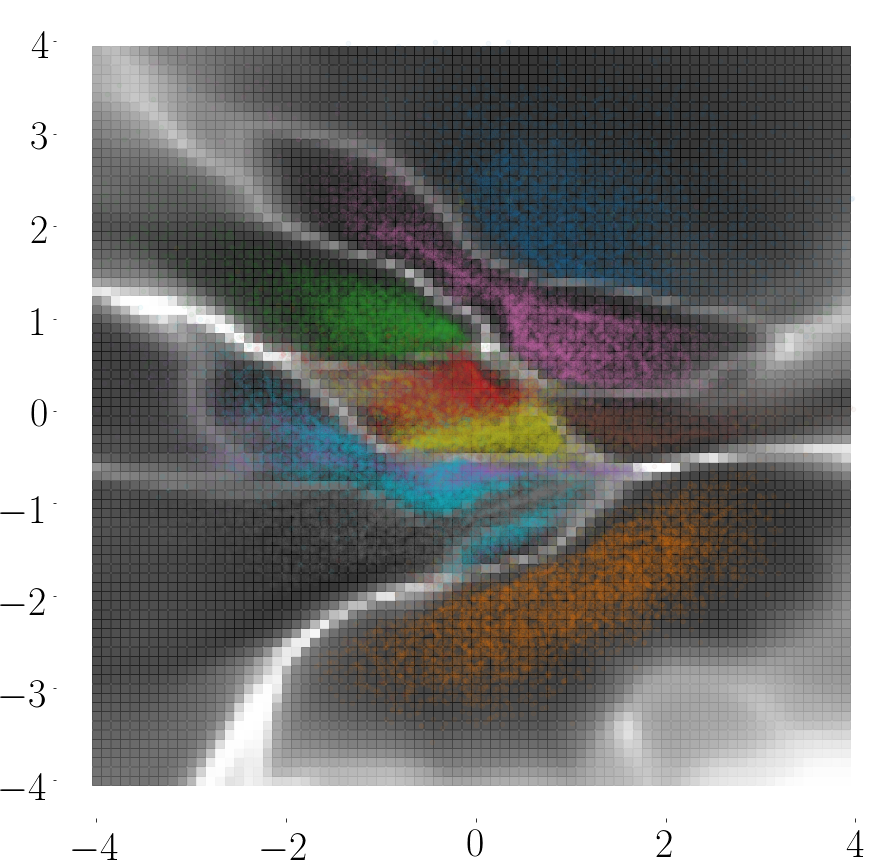}\\
\end{subfigure}%
\begin{subfigure}{0.5\textwidth}
\includegraphics[page=2,width=\textwidth]{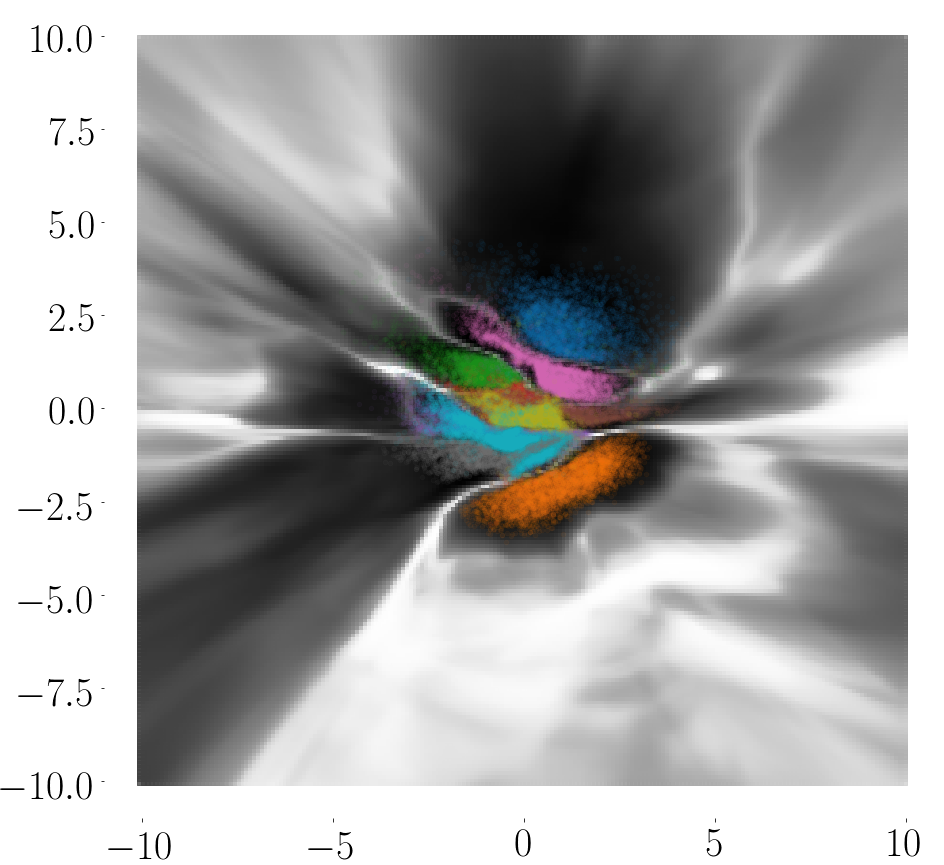}\\
\end{subfigure}
\begin{subfigure}{.5\textwidth}
\includegraphics[page=1,width=\textwidth]{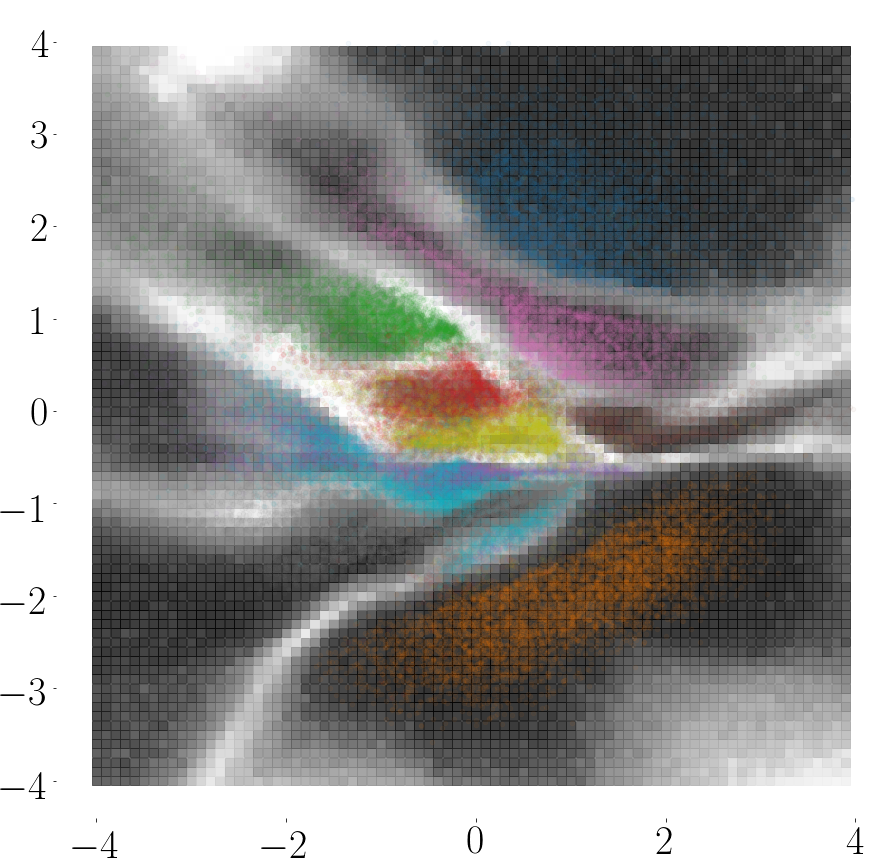}
\end{subfigure}%
\begin{subfigure}{0.5\textwidth}
\includegraphics[page=2,width=\textwidth]{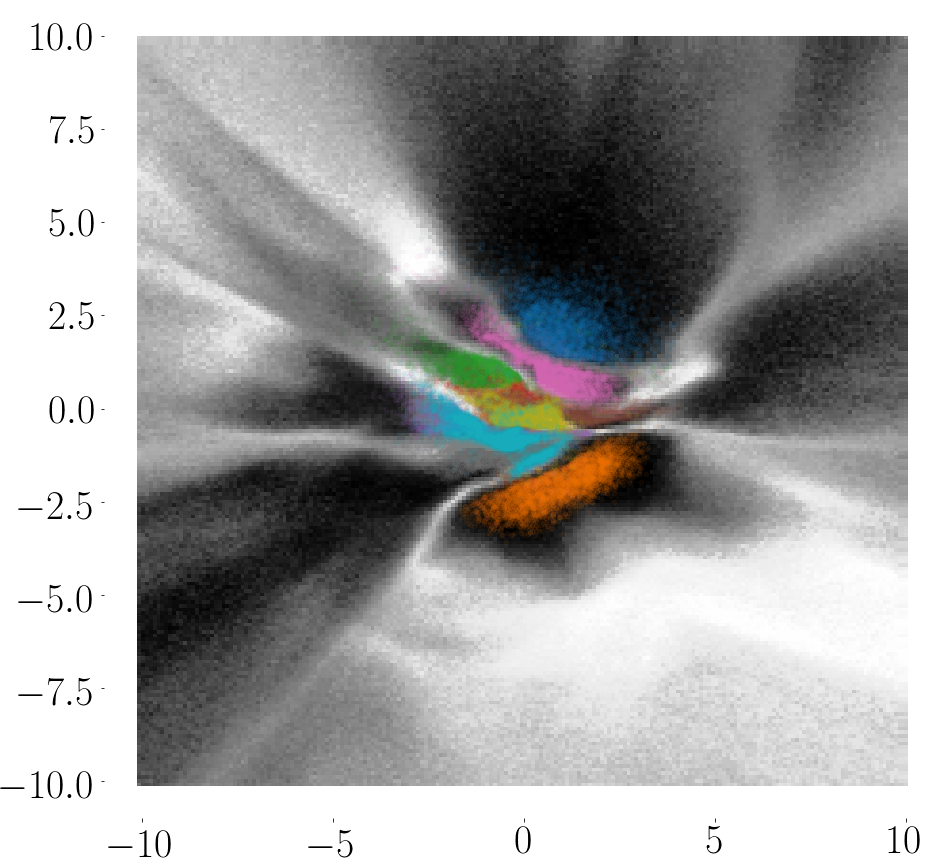}
\end{subfigure}
\captionof{figure}{\label{fig:vis} Visualisation of uncertainty measures: Inhibited Softmax (top) and Monte Carlo Dropout (bottom) on the VAE's latent space. The shade of grey represent the normalised uncertainty on the samples generated from the latent space. The lighter the more uncertainty. The points represent the encoded test set and the colours are the classes. The axes show coordinates in the latent space. Note the similarity in the regions between the methods.}
\end{strip}

\section{Conclusion \& Discussion}

% Add more discussion

We reinterpreted Inhibited Softmax as a new method for uncertainty estimation. The method can be easily applied to various multilayer neural network architectures and does not require additional parameters, multiple stochastic forward passes or out-of-distribution examples.

The results show that the method outperforms baseline and performs comparably to the other methods. The method does not deteriorate the predictive performance of the classifier.

The predictive performance from \textit{IMDB/Movie Reviews} experiment suggests that even if the observation comes from another probability distribution and the uncertainty measure is able to detect it, the network can still serve as a useful classifier. Other way around, if we have an accurate classifier, it might happen that it shows uncertainty in its predictions.

\bibliography{iclr2019_conference}
\bibliographystyle{plainnat}

\clearpage

\begin{strip}

\includegraphics[page=1,width=\textwidth]{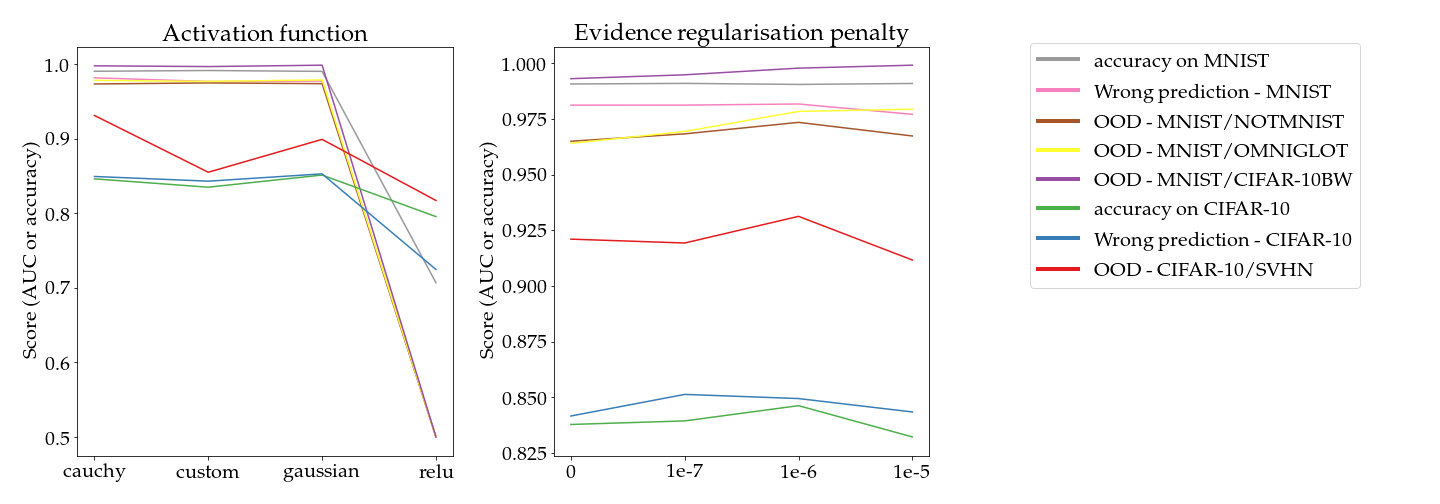}
\captionof{figure}{\label{fig:ablation}Results of the ablation experiments. The plots show the wrong prediction experiment's ROC AUC, out of-distribution experiment's ROC AUC and accuracies on \textit{MNIST} and \textit{CIFAR} datasets. We check the performance with changed \textit{l2} penalty (left), changed activation function (middle) and changed activity regularization penalty (right).}

\end{strip}

\section*{Appendix 1 - Ablation Study}
\label{appendix1}

We show the performance of our methods in the experiments on \textit{CIFAR-10} and \textit{MNIST} datasets if the hyperparameters are changed \hyperref[fig:ablation]{(Figure 4)}. The results are averages over three runs of experiments. The evidence regularisation penalty is important. The networks without it performed worse on all the checked tasks with an exception of wrong prediction detection on \textit{MNIST}. With too much of the regularization, the networks are unable to fit the data well. It results in a drop in results of all experiments on \textit{CIFAR-10}. We show also that it is possible to replace the rescaled Cauchy PDF function with another kernel function. Here, we show a comparison with rescaled Gaussian PDF ($\exp{\frac{-x^{2}}{2}}$) and a custom nonlinear function:

\begin{equation}
f(x) = 
\begin{cases}
    min(x+1,-x+1),& \text{if } |x| < 1\\
    0,              & \text{otherwise}
\end{cases}
\end{equation}

Still, non-kernel activation functions like ReLU are not able to correctly perform the out-of-distribution detection tasks.

\section*{Appendix 2 - Experiments details \& Preprocessing}

\textit{Omniglot} consists of black letters on a white background. We negated the images so that they resemble more the images from \textit{MNIST}. Without the negation, all the methods performed very well (between 0.999 and 1 in ROC AUC) on the out-of-distribution detection task.

In the sentiment analysis task, before feeding the data to the networks we preprocessed it by removing stopwords and words that did not occur in the pretrained embedding. We use a pretrained embedding in order to model vocabulary that exists in the out-of-distribution sets and was not present in the in-distribution dataset.

Regarding the baseline publication \citep{HendrycksGimpel16}: we were able to corroborate the results on \textit{IMDB/Movie Reviews} experiment when we split the observations from \textit{Movie Reviews} into single lines and use the same randomly initialized embeddings. The model was trained on full reviews from \textit{IMDB}. We argue that in such setting the use of average pooling after the embedding invalidates the experiment. The input is padded with zeros to 400 words. Now, if the sentence is very short, say 10 words, the true average of the embed words will be diminished by all the zeros after the sentence. Thus, the uncertainty estimation method needs only to correctly work in a very narrow region centred at zero in order to achieve high scores in the experiment.

% Not really, they just make the task much easier by changing the distribution of ood data to be further away from the training data

For the state-of-the-art methods we compared with we made the following choices:

\begin{itemize}
\item Deep Ensembles - we skipped adversarial training, as adversarial training is a way to improve the performance of any of the methods used in the paper. We use an ensemble of 5 base networks.
\item Monte Carlo Dropout - for \textit{MNIST} we use dropout probability 0.25 on all but last layers, 0.5 on the only trainable layer in the sentiment experiment, and on \textit{CIFAR-10} network 0.25 only on the last but one layer. In larger networks setting dropout on many layers required a greater number of epochs to achieve top performance. We run 50 forward passes for variational prediction.
\item Bayes By Backprop - we observed that there is a trade-off between accuracy and OOD detection performance that depends on the initialisation of the variance. We chose initialisation that led to the best combination of accuracy and OOD detection performance in our view. We run 10 forward passes for variational prediction.
\end{itemize}

We followed the original publications when possible. For example, the number of networks in DE and number of inferences in BBP and MCD is taken from the original descriptions of the algorithms.

\section*{Appendix 3 - Visualization}

In the visualisation section of the paper, the uncertainties were normalised so that the predictive entropy and IS' probabilities could be visually compared. The normalisation for a method was performed by ranking the uncertainties and splitting them into 400 equal bins. Then, the bins are plotted. White colour represents the bin with the most uncertainty, the black - with the least.

% Moze wykresic a'la MNF?

For a better understanding of the latent space, we visualise the images decoded from the grid from the latent space \hyperref[fig:decoded]{(Figure 3)}.

\begin{strip}
\captionsetup{type=figure}
\centering
\begin{subfigure}{.5\textwidth}
\includegraphics[page=1,width=\textwidth]{vae_is}\\
\end{subfigure}%
\begin{subfigure}{.5\textwidth}
\includegraphics[page=2,width=\textwidth]{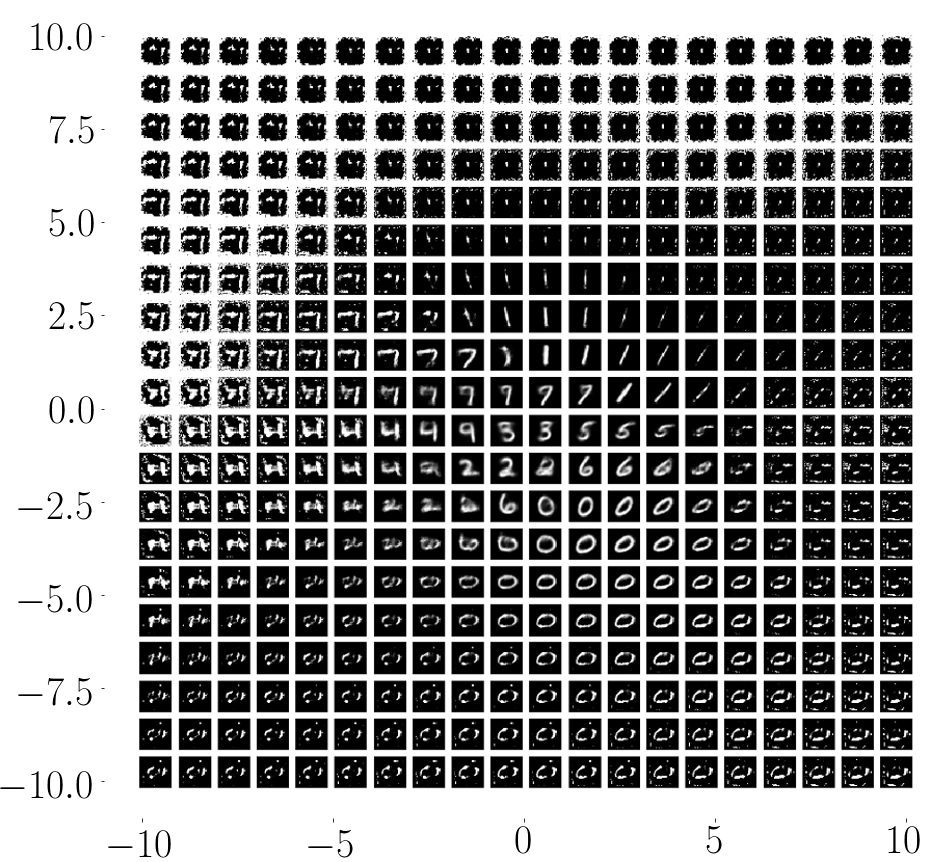}\\
\end{subfigure}

\captionof{figure}{\label{fig:decoded}Visualisation of the images generated from the grid in the latent space (right) next to the uncertainty measure visualisation (left).}
\end{strip}

\section*{Appendix 4 - Mathematical derivations}
\label{appendix}

\begin{lemma}
\label{eq:1}
Let us assume that $x$ - the input to the $IS_a$ function is a result of a linear layer applied to the penultimate layer of a network (or input in case of networks without hidden layers) $x_p$, namely:

\begin{equation}
    x_i = \theta_i x_p + b_i.
\end{equation}

Then the following holds:

\begin{equation}
    \begin{split}
    \dfrac{\delta \log P_a^c(x)}{\delta b_i} = S(x)_i - IS_a(x)_i.   
    \end{split}
\end{equation}

\end{lemma}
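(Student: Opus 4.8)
The plan is to compute the derivative directly from the definition of $P_a^c$, using the chain rule through the single coordinate $x_i$. Since $x_i = \theta_i x_p + b_i$, we have $\partial x_j/\partial b_i = \mathbb{I}_{i=j}$, so the only channel through which $b_i$ affects $\log P_a^c(x)$ is the $i$-th logit. Hence $\dfrac{\delta \log P_a^c(x)}{\delta b_i} = \dfrac{\partial \log P_a^c(x)}{\partial x_i}$, and it suffices to differentiate $\log P_a^c$ with respect to $x_i$.

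Next I would write $\log P_a^c(x) = \log\!\big(\sum_{j=1}^n \exp x_j\big) - \log\!\big(\sum_{j=1}^n \exp x_j + \exp a\big)$ and differentiate each term. The derivative of the first term with respect to $x_i$ is $\dfrac{\exp x_i}{\sum_{j=1}^n \exp x_j} = S(x)_i$, by definition of the standard softmax. The derivative of the second term with respect to $x_i$ is $\dfrac{\exp x_i}{\sum_{j=1}^n \exp x_j + \exp a} = IS_a(x)_i$, by the defining formula for the Inhibited Softmax. Subtracting gives $S(x)_i - IS_a(x)_i$, which is exactly the claimed identity.

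This is essentially a one-line computation, so there is no serious obstacle; the only point worth stating carefully is the reduction in the first paragraph — that because the bias $b_i$ enters only the $i$-th logit, the total derivative with respect to $b_i$ coincides with the partial derivative with respect to $x_i$, so the presence of the linear layer and the activations $x_p$ play no role beyond fixing this dependency structure. One could alternatively differentiate the product form $P_a^c = \big(\sum \exp x_j\big)\big/\big(\sum \exp x_j + \exp a\big)$ via the quotient rule and then simplify, but the logarithmic-splitting route above is cleanest and makes the appearance of $S(x)_i$ and $IS_a(x)_i$ immediate.
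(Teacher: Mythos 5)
Your proposal is correct and follows essentially the same route as the paper's proof: split $\log P_a^c(x)$ into $\log\bigl(\sum_j \exp x_j\bigr) - \log\bigl(\sum_j \exp x_j + \exp a\bigr)$, differentiate with respect to $x_i$ to obtain $S(x)_i - IS_a(x)_i$, and reduce the $b_i$-derivative to this partial via $\partial x_i/\partial b_i = 1$. If anything, your explicit remark that $b_i$ enters only the $i$-th logit (so the chain rule collapses to a single term) is stated slightly more carefully than in the paper.
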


\begin{proof}
Let us notice that:
\begin{equation}
\begin{split}
    \dfrac{\delta \log P_a^c(x)}{\delta x_i} =\frac{\delta \log \frac{\sum_{j=1}^{n}\exp x_j}{\sum_{j=1}^{n}\exp x_j + \exp a}}{\delta x_i} = \\
    \frac{\delta \log \sum_{j=1}^{n}\exp x_j}{\delta x_i}  - \\ \frac{\delta\log\left( \sum_{i=j}^{n}\exp x_j + \exp a\right)}{\delta x_i} =\\
    \frac{\delta \left(\sum_{j=1}^{n}\exp x_j\right)}{\delta x_i} \frac{1}{\sum_{j=1}^{n}\exp x_i} - \\ \frac{\delta \left(\sum_{j=1}^{n}\exp x_j + \exp a\right)}{\delta x_i} \frac{1}{\sum_{j=1}^{n}\exp x_j + \exp a} = \\
    \frac{\exp x_i}{\sum_{j=1}^{n}\exp x_j} - \frac{\exp x_i}{\sum_{j=1}^{n}\exp x_j + \exp a} = \\ S(x)_i - IS_a(x)_i.
\end{split}
\end{equation}
Now - the following equation holds:
\begin{equation}
    \frac{\delta x_i}{\delta b_i} = \frac{\delta (\theta_i x_p + b_i)}{\delta b_i} = 1.
\end{equation}
So:
\begin{equation}
\begin{split}
    \dfrac{\delta \log P_a^c(x)}{\delta b_i} = \dfrac{\delta \log P_a^c(x)}{\delta x_i}\frac{\delta x_i}{\delta b_i} = \\
    \\ S(x)_i - IS_a(x)_i.
\end{split}
\end{equation}
What completes the proof.
\end{proof}

\begin{lemma}
\label{eq:2}
Let us assume that $x$ - the input to the $IS_a$ function is a result of a linear layer applied to the penultimate layer of a network (or input in case of networks without hidden layers) $x_p$, namely:

\begin{equation}
    x_i = \theta_i x_p + b_i.
\end{equation}

Now, let $l_S(x, t)$ will be a \textit{cross-entropy} loss for softmax activation with $x$ - input to softmax and $t$ - a true class. Then the following equation holds:

\begin{equation}
    \dfrac{\delta l_{S}(x, t)}{\delta b_i} = S(x)_i - \mathbb{I}_{i = t},
\end{equation}
\end{lemma}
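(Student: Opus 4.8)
The plan is to mirror the structure of the proof of Lemma \ref{eq:1}: first compute the derivative of the cross-entropy loss with respect to the softmax inputs $x_i$, then use the chain rule together with the fact that $\delta x_i / \delta b_i = 1$ for a linear layer. Recall that $l_S(x,t) = -\log S(x)_t = -x_t + \log \sum_{j=1}^n \exp x_j$.

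First I would differentiate $l_S$ with respect to $x_i$. The term $-x_t$ contributes $-\mathbb{I}_{i=t}$, and the log-sum-exp term contributes $\exp x_i / \sum_{j=1}^n \exp x_j = S(x)_i$, so that $\delta l_S(x,t)/\delta x_i = S(x)_i - \mathbb{I}_{i=t}$. Next, exactly as in Lemma \ref{eq:1}, from $x_i = \theta_i x_p + b_i$ we get $\delta x_i / \delta b_i = 1$. Then the chain rule gives
\begin{equation}
\dfrac{\delta l_S(x,t)}{\delta b_i} = \dfrac{\delta l_S(x,t)}{\delta x_i} \dfrac{\delta x_i}{\delta b_i} = S(x)_i - \mathbb{I}_{i=t},
\end{equation}
which is the claimed identity.

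There is no real obstacle here; the only mild subtlety is bookkeeping in the log-sum-exp derivative and being careful that $x_t$ appears both explicitly (as $-x_t$) and inside the sum, so that for $i = t$ the two contributions are $-1 + S(x)_t$. Since this is a single-variable routine computation, I would present it compactly, reusing the observation about $\delta x_i/\delta b_i$ already established in the previous lemma rather than re-deriving it.
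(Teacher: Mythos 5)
Your proposal is correct and follows the same route as the paper's own proof: differentiate $l_S(x,t) = -x_t + \log\sum_{j=1}^n \exp x_j$ with respect to $x_i$ to get $S(x)_i - \mathbb{I}_{i=t}$, note $\delta x_i/\delta b_i = 1$ from the linear layer, and apply the chain rule. The bookkeeping point you raise about $x_t$ appearing both explicitly and inside the sum is handled identically in the paper.
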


\begin{proof}
Let us notice that:

\begin{equation}
\begin{split}
    \dfrac{\delta l_{S}(x, t)}{\delta x_i} = \\
    \frac{-\delta\log\left(\frac{\exp x_i}{\sum_{j = 1}{n} \exp x_j}\right)}{\delta x_i} = \\
    -\frac{\delta x_t}{\delta x_i} + \frac{\delta \log \sum_{j=1}^{n}\exp x_j}{\delta x_i} = \\
    -\mathbb{I}_{i = t} + \frac{\delta \sum_{j=1}^{n}\exp x_j}{\delta x_i}\frac{1}{\sum_{j=1}^{n}\exp x_j} = \\
    -\mathbb{I}_{i = t} + \frac{\exp x_i}{\sum_{j=1}^{n}\exp x_j} = \\
    S(x)_i - \mathbb{I}_{i = t}
\end{split}
\end{equation}

Now - the following equation holds:
\begin{equation}
    \frac{\delta x_i}{\delta b_i} = \frac{\delta (\theta_i x_p + b_i)}{\delta b_i} = 1.
\end{equation}
So:
\begin{equation}
    \begin{split}
        \dfrac{\delta l_{S}(x, t)}{\delta b_i} &= \dfrac{\delta l_{S}(x, t)}{\delta x_i}\dfrac{\delta x_i}{\delta b_i} =\\
    &S(x)_i - \mathbb{I}_{i = t},
    \end{split}
\end{equation}
What completes the proof.

\end{proof}

\section*{Appendix 5 - No evidence regularisation scenario}

It is possible to use the Inhibited Softmax without the evidence regularisation. We present a table with comparison between IS, BASE and IS without the evidence regularisation \hyperref[tab:noer]{(Table 6)}. While the performance of the IS without the evidence regularisation does not match the other methods in the benchmark, it still outperforms the baseline in the out-of-distribution detection. If additional hyperparameter is a concern, one might drop it sacrificing the uncertainty estimation performance of IS.

\begin{table}
%\centering
\begin{tabular}{ l | c | c | c }
\textbf{Task} & \textbf{IS} & \textbf{ISnoER} & \textbf{BASE}  \\\hline
MNIST/NOTMNIST & 0.973 & 0.966 & 0.958 \\\hline
MNIST/OMNIGLOT & 0.978 & 0.964 & 0.956 \\\hline
MNIST/CIFAR-10BW & 0.998 & 0.994 & 0.997 \\\hline
CIFAR-10/SVNH & 0.931 & 0.921 & 0.866 \\\hline
CIFAR-10/LFW-A & 0.74 & 0.715 & 0.593 \\\hline
acc MNIST & 0.991 & 0.991 & 0.992 \\\hline
acc CIFAR & 0.846 & 0.838 & 0.851
\end{tabular}%

\caption{\label{tab:noer} Comparison of the Inhibited Softmax with and without the evidence regularisation (resp. IS, ISnoER) and the baseline on the out-of-distribution detection and predictive performance tasks. The metrics are ROC AUC for the out-of-distribution detection and accuracy for the predictive performance.}
\end{table}
\end{document}